\documentclass{article}
\usepackage{authblk}

\date{}

\PassOptionsToPackage{numbers, sort&compress}{natbib}
\usepackage[table]{xcolor}  
\definecolor{VibrantPink}{HTML}{FF4FB3} 
\PassOptionsToPackage{
    colorlinks=true,
    citecolor=VibrantPink,
    linkcolor=VibrantPink,
    urlcolor=VibrantPink
}{hyperref}

\usepackage[preprint]{neurips_2026}

\usepackage[utf8]{inputenc} 
\usepackage[T1]{fontenc}    
\usepackage{microtype}      
\usepackage[table]{xcolor}  
\usepackage{bbm}            
\usepackage{amsfonts}       
\usepackage{nicefrac}       
\usepackage{enumitem}       
\definecolor{LightBlue}{HTML}{1E90FF}
\usepackage{amsmath, amssymb, amsthm, mathtools}
\usepackage{cancel}

\theoremstyle{plain}
\newtheorem{theorem}{Theorem}[section]
\newtheorem{proposition}[theorem]{Proposition}

\theoremstyle{definition}

\theoremstyle{remark}

\usepackage{graphicx}
\usepackage{subcaption}
\usepackage{wrapfig}
\usepackage{multirow}
\usepackage{booktabs}   
\usepackage{longtable}
\usepackage{array}
\usepackage{caption}
\usepackage{algorithm}
\usepackage{algorithmic}
\usepackage[export]{adjustbox}

\definecolor{best}{RGB}{0,0,0}
\definecolor{codeblue}{rgb}{0.8,0.2,0.2}
\definecolor{codegray}{rgb}{0.5,0.5,0.5}
\definecolor{stdgray}{gray}{0.45}
\usepackage{xcolor}
\definecolor{brightorange}{RGB}{217,119,6}

\definecolor{ICMLAccent}{HTML}{2B6E73}
\definecolor{ICMLBg}{HTML}{F3F7F7}
\definecolor{ICMLStrong}{HTML}{1F4E79}
\definecolor{ICMLStrongBg}{HTML}{EEF4FA}

\usepackage{listings}
\usepackage[most]{tcolorbox}

\newtcolorbox{callout}[1][]{%
    breakable, enhanced, colback=ICMLBg, colframe=ICMLAccent,
    boxrule=0.5pt, arc=2mm, left=1.5mm, right=1.5mm, top=1.0mm, bottom=1.0mm,
    title={}, #1
}

\newtcolorbox{calloutimportant}[1][]{%
    breakable, enhanced, colback=ICMLStrongBg, colframe=ICMLStrong,
    boxrule=0.6pt, arc=2mm, left=1.8mm, right=1.5mm, top=1.0mm, bottom=1.0mm,
    borderline west={2.2pt}{0pt}{ICMLStrong}, title={}, #1
}

\newtcolorbox{calloutmotivation}[1][]{%
    breakable, enhanced, colback=ICMLStrongBg, colframe=ICMLStrong,
    boxrule=0.6pt, arc=2mm, left=1.8mm, right=1.5mm, top=1.0mm, bottom=1.0mm,
    borderline west={2.2pt}{0pt}{ICMLStrong}, title={}, fonttitle=\bfseries, #1
}

\usepackage{hyperref}
\hypersetup{
    colorlinks=true,
    citecolor=VibrantPink,
    linkcolor=VibrantPink,
    urlcolor=VibrantPink,
    pdftitle={Your Paper Title},
    pdfauthor={Your Name}
}

\usepackage[capitalize,noabbrev]{cleveref}

\newcommand{\pmt}[1]{{\scriptscriptstyle \pm #1}}

\title{Plan, Don’t Pose: Long Composite Motion Generation with Text-Aligned BFM}

\author[1,2]{Nikolay Shvetsov$^{*}$}
\author[3,4]{Maksim Bobrin}
\author[4,5]{Nazar Buzun}
\author[1]{Anton Bozhedarov}
\author[3,4]{Dmitry V. Dylov}

\affil[1]{AvaCapo, Potsdam, Germany}
\affil[2]{Potsdam University, Potsdam, Germany}
\affil[3]{Applied AI Institute, Computational Imaging Lab, Moscow, Russia}
\affil[4]{AXXX, Moscow, Russia}
\affil[5]{Innopolis University, Innopolis, Russia}

\begin{document}

\begingroup
\renewcommand\thefootnote{*}
\footnotetext{\texttt{n.shvetsov@avacapo.com}}
\endgroup

\maketitle
\begin{abstract}
Text-to-motion (T2M) generation has broad applications in character animation, virtual avatars, and human-robot interaction. Existing methods typically generate pose trajectories or motion tokens directly from language, forcing a single model to handle semantic interpretation, long-horizon structure, and low-level physical realization. This coupling makes them costly and often unreliable for long, compositional, or semantically dense prompts.
We propose Text2BFM, the first framework that aligns natural language with pretrained Behavioral Foundation Models (BFMs) for T2M generation without relying on heavy end-to-end motion generators. Text2BFM operates in the latent policy space of a frozen BFM, using it as an executable motion prior. A text-aligned variational behavioral bottleneck compresses BFM policy-latent sequences into compact motion representations that are compatible with language and preserve long-horizon behavioral structure.
Generation is performed in this compact behavioral manifold with a lightweight conditional generator, and the resulting latent encoded behaviors are decoded into policy latents that drive the pretrained frozen BFM. By decoupling semantic planning from motion execution, Text2BFM achieves efficient, robust T2M generation and strong performance on long, compositional textual descriptions.
\end{abstract}

\begin{figure}[htbp]
\centering
\footnotesize
\begin{tabular}{@{}c@{}}
\includegraphics[width=0.75\linewidth]{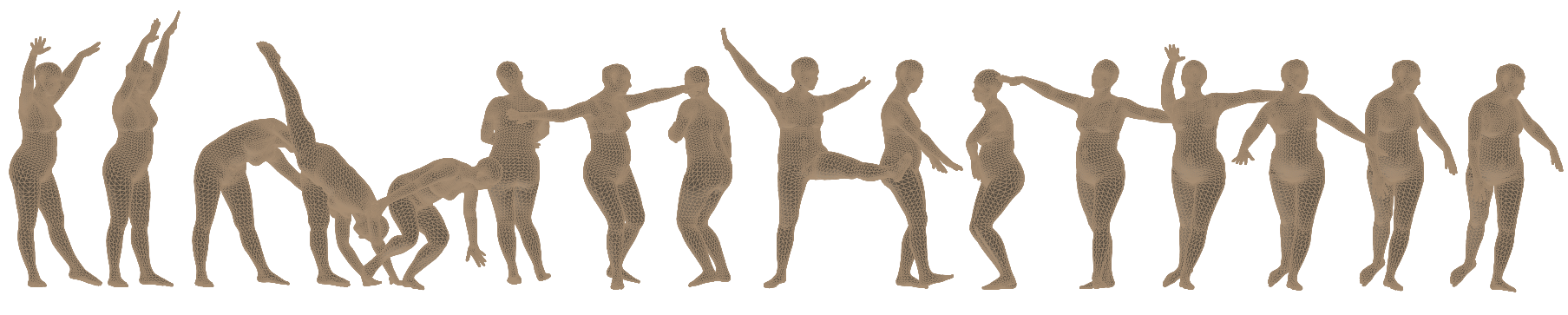} \\
\midrule
A person does a cartwheel, a spin, then two twirls, \\
and finishes with a low jump landing using ballet movements. \\
\includegraphics[width=0.75\linewidth]{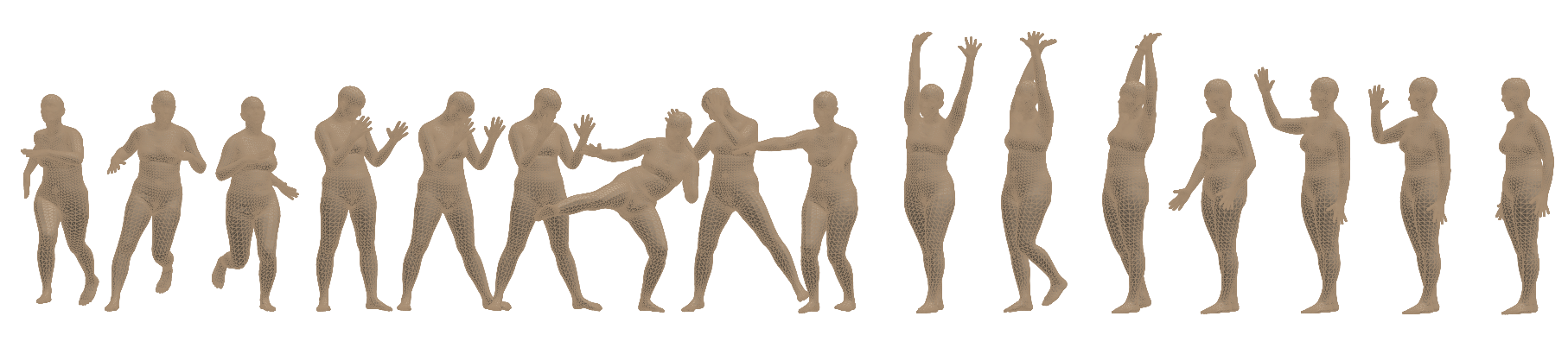} \\
\midrule
A person runs forward, then turns left, does a quick, sharp punch, and \\
follows it with a kick to the right. After that, he walks forward, \\
raises his arms in victory, and finishes by waving happily. \\
\end{tabular}
\caption{Two rollouts of proposed Text2BFM model, showcasing precise following of the instructions in the prompt while remaining semantically correct.}
\label{fig:text2bfm-rollouts}
\end{figure}

\section{Introduction}

Natural language describes human motion at the level of intentions, phases, and behaviors: a person walks toward an object, reaches for it, picks it up, and then performs another action. 
Most text-to-motion methods, however, generate pose trajectories or discrete motion tokens directly from language embeddings \citep{guo2022humanml3d, petrovich2022temos, zhang2022motiondiffuse, tevet2023mdm, chen2023mld, zhang2023t2mgpt, mo2023momask}. 
This requires a single generative model to simultaneously solve semantic interpretation, long-horizon temporal organization, and low-level physical realization. 
As a result, visually plausible motions can still exhibit foot sliding, unstable contacts, unnatural transitions, implausible joint configurations, or poor consistency under compositional prompts \citep{rempe2021humor, yuan2023physdiff, tessler2023calm}. 
We argue that this stems from a representational mismatch: language specifies behavior, while most text-to-motion models synthesize positions.

A more natural interface is to expose motion to the text model as executable behavior. 
\emph{Behavioral Foundation Models} (BFMs) provide such an interface by learning structured spaces of reusable behaviors from offline data \citep{metamotivo,sikchi2025rl,li2026bfmzero,bobrin2026zeroshot}. 
In particular, Forward-Backward representations \citep{blier2021learning, touati2021learning} enable promptable behavior inference through policy latents. 
A single global policy latent can represent short behaviors such as walking forward or turning, but long text prompts typically require temporally localized phases, transitions, and hierarchical structure \citep{petrovich2022temos, zhang2023t2mgpt, go2024gotozero, luo2024hierarchical}. 
Representing a motion as a trajectory of local policy latents $z_{1:T}$ is therefore more expressive, but a direct generation of such trajectories creates another long, weakly text-aligned sequence modeling problem.

We address this problem with a text-aligned variational behavioral bottleneck. 
Given a trajectory of Forward-Backward policy encoded representations $z_{1:T}$, the bottleneck compresses it into a more compact motion representations that can reconstruct executable local policy latents, while being aligned with textual descriptions.
Unlike a purely reconstructive autoencoder \citep{kingma2014auto, rezende2014stochastic}, our bottleneck is additionally regularized by a semantic-similarity objective that aligns the bottleneck with the corresponding text embedding, following the broader success of cross-modal representation learning \citep{radford2021learning, tevet2022motionclip, petrovich2023tmr}. 
The resulting plan space is both temporally compact and language-discriminative.

Text-conditioned generation is then performed in this compact behavioral plan space. 
Rather than generating poses or full policy-latent trajectories, a conditional generative model samples a text-consistent motion program, which is decoded into local policy latents and is executed by the pretrained FB policy. 
This separates semantic planning from physical realization: the compact program captures high-level temporal structure, while the behavioral policy provides executable motion. 
We instantiate the generator with Transformer-based flow matching, using flow matching as an efficient continuous generative model \citep{lipman2023flow, liu2023flow, albergo2023building} in the learned plan space rather than directly in the pose space \citep{peebles2023scalable, hy2024hymotion}.

Our contributions are:
\begin{itemize}
    \item We formulate text-to-motion generation as generation of compact executable behavioral programs, using policy-latent trajectories $z_{1:T}$ from a pretrained Forward-Backward behavioral model.
    \item We introduce a text-aligned variational behavioral bottleneck that compresses policy-latent trajectories while aligning the resulting motion programs with language.
    \item A unified framework for semantic planning and physically grounded motion execution via the learned bottleneck with text-conditioned flow matching and BFM-policy rollout.
    \item We show that the proposed framework improves semantic consistency and compositional ordering, while identifying a distributional-quality trade-off caused by the frozen BFM prior.
\end{itemize}

\section{Related Work}

\paragraph{Zero-Shot Reinforcement Learning.}
A central goal in reinforcement learning is to learn reusable behavioral structure that transfers to new tasks without retraining. Successor representations and successor features \citep{dayan1993improving,barreto2017successor} decompose value functions into dynamics- and reward-dependent components, enabling efficient transfer. Forward-Backward (FB) representations extend this idea by learning low-rank successor-measure factorizations and latent codes for executable behaviors \citep{touati2021learning,blier2021learning, touati2023does,agarwal2025proto}, making them well suited for motion generation as policy interfaces rather than descriptive embeddings.
However, standard FB policies usually rely on a single behavior latent or short-horizon command, which is too limited for multi-phase language prompts. We therefore represent motion as local policy latents $z_{1:T}$ and learn a compact, text-aligned variational bottleneck over them, preserving temporal structure while enabling generation in a lower-dimensional behavioral plan space. \cite{sikchi2025rl} also enables imitation learning via text descriptions, but relies on heavy image/video generation models, which limits applications.

\paragraph{Physics-Based Motion Control.}
Physics-based motion control methods learn executable motion priors from reference motion data. AMP regularizes control policies toward natural motion through adversarial imitation~\citep{peng2021amp}, while ASE learns reusable skill embeddings for downstream control~\citep{peng2022ase}. CALM learns directable latent controllers for virtual characters~\citep{tessler2023calm}, and PHC learns robust humanoid controllers for high-fidelity motion tracking and recovery~\citep{luo2023perpetual}. Unlike purely kinematic generators, these methods produce motion through policy execution, making them closely related to our use of BFM latents as executable behavioral commands.

\paragraph{Text-to-Motion Generation.}
Text-to-motion generation has progressed from recurrent and VAE-based models~\citep{ahuja2019language2pose, ghosh2021synthesis, petrovich2022temos} to diffusion-based, token-based, and latent generative approaches trained on large-scale datasets such as HumanML3D~\citep{guo2022humanml3d}. MotionDiffuse and MDM synthesize motion through conditional denoising~\citep{zhang2022motiondiffuse, tevet2023mdm}, while MLD performs diffusion in a learned motion latent space~\citep{chen2023mld}. T2M-GPT and MoMask instead rely on discrete motion tokens with autoregressive or masked generation~\citep{zhang2023t2mgpt, mo2023momask}.

Although these methods achieve strong visual quality, they primarily operate in pose space, motion-token space, or kinematic latent spaces. As a result, a single generator must handle semantic interpretation, temporal planning, and low-level physical realization, which can lead to weak long-horizon coherence, unstable contacts, foot sliding, or missing stages in compositional prompts. In contrast, our method generates text-conditioned behavioral programs that are decoded into executable policy latents, delegating low-level motion realization to a pretrained behavioral policy.

\paragraph{Language-Aligned Latent Motion Generation.}
Cross-modal representation learning has shown that language can semantically organize visual and motion embeddings, as in CLIP-style learning~\citep{radford2021learning}, MotionCLIP~\citep{tevet2022motionclip}, and TMR~\citep{petrovich2023tmr}. Our bottleneck similarly aligns motion programs with text, but keeps the latent space tied to executable policy commands rather than using it only for retrieval or conditioning. Latent generative models improve efficiency by operating on compact representations instead of high-dimensional sequences. VAEs provide a standard framework for stochastic compression~\citep{kingma2014auto, rezende2014stochastic}, while flow matching offers an efficient alternative to diffusion for continuous generation~\citep{lipman2023flow, liu2023flow, albergo2023building}. We use flow matching to generate compact text-conditioned behavioral programs in the learned bottleneck space, rather than poses directly.

Our work combines zero-shot behavioral representations, text-motion alignment, and latent flow-based generation. Unlike prior text-to-motion methods that synthesize poses or long motion-token sequences, our model generates compact, text-aligned behavioral programs that decode into policy latents and are realized through policy rollout.

\begin{figure*}[t]
    \centering
    \includegraphics[width=1.0\textwidth]{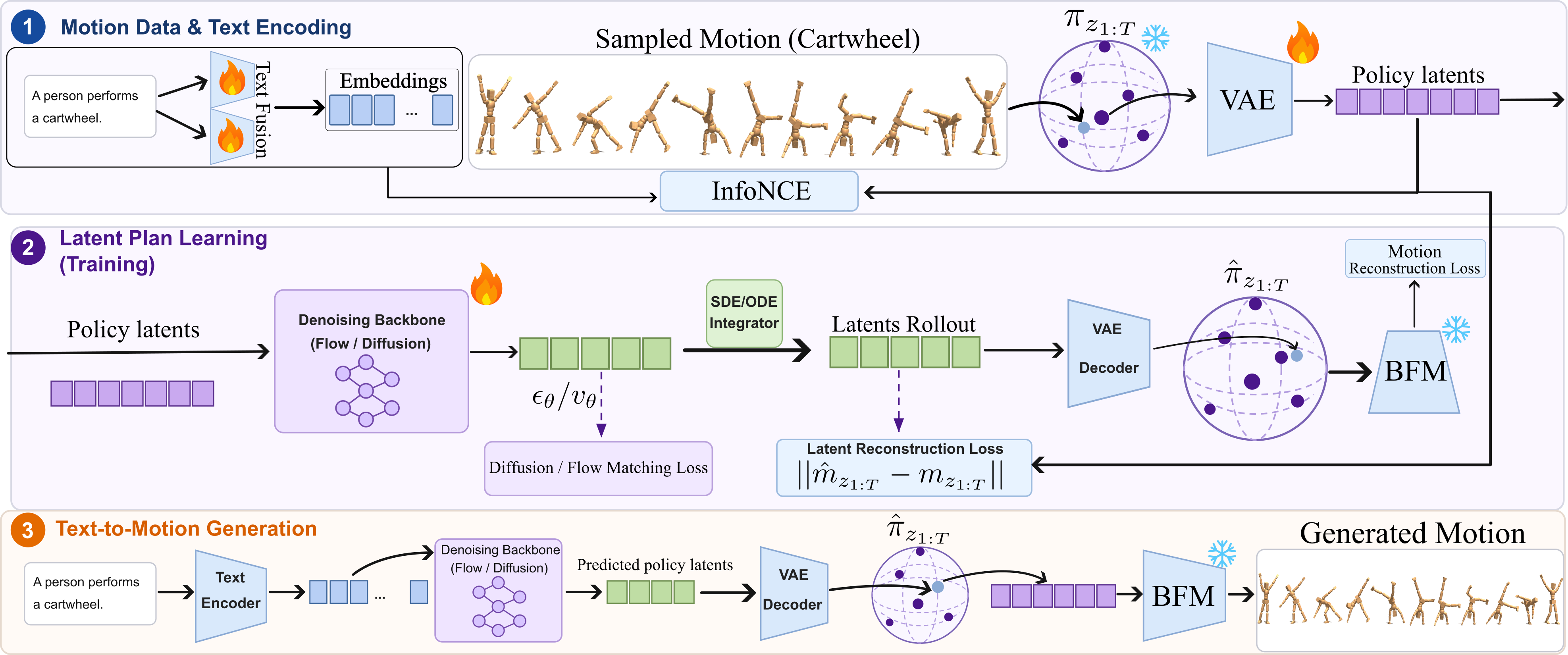}
    \caption{Text2BFM method and its principal diagram components. Shown are the training (steps 1 and 2) and the generation (3) pipelines.}
    \label{fig:diagram}
\end{figure*}

\section{Method}

Given a natural-language instruction $Y$, our goal is to generate a motion trajectory
$\tau=\{s_t\}_{t=1}^{T}$. Rather than synthesizing joint poses directly, we generate motion
through a frozen Behavioral Foundation Model (BFM), which provides a latent-conditioned policy
\begin{equation}
a_t \sim \pi_{\mathrm{BFM}}(a_t \mid s_t,z_t),
\qquad
s_{t+1}\sim p_{\mathrm{env}}(s_{t+1}\mid s_t,a_t),
\label{eq:bfm_policy}
\end{equation}
where $s_t$ is the humanoid state, $a_t$ is the control action, and $z_t$ is a local behavioral
latent. Since actions are applied between consecutive states, the policy-latent sequence has length
$T_z=T-1$. The generative model therefore produces executable behavioral commands
$z_{1:T_z}$, which are rolled out by the pretrained policy $\pi_{\mathrm{BFM}}$ and environment
dynamics $p_{\mathrm{env}}$.

For each training motion $\tau^i=\{s^i_t\}_{t=1}^{T}$, we infer a corresponding sequence of
BFM policy latents using the frozen MetaMotivo tracking procedure. Specifically, we apply the
BFM backward map $B$ to future states, average the resulting embeddings over a short lookahead
window, and project them to the normalized policy-latent space:
\begin{equation}
z^i_t =
\operatorname{Proj}_z
\left(
\frac{1}{H_t}
\sum_{k=0}^{H_t-1}
B(s^i_{t+1+k})
\right),
\qquad
H_t=\min(L,T-t),
\qquad
t=1,\ldots,T_z .
\label{eq:latent_extraction}
\end{equation}
Thus, $z^i_t$ summarizes the future-conditioned behavior to be executed from state $s^i_t$.
No latent is required after the final state.

Directly generating the full sequence $z_{1:T_z}$ remains a long-horizon sequence-modeling
problem. We therefore introduce a compact variational behavioral bottleneck that compresses
$z_{1:T_z}$ into a shorter motion program $m_{1:T_m}$, where $T_m < T_z$. The bottleneck is
trained to preserve the executable information needed to reconstruct BFM policy latents while also
aligning the compact program with the input text. This is suitable because BFM latents encode
behavioral intent rather than frame-level pose details; language-described motions typically consist
of a small number of semantic phases, making the latent trajectory redundant over time.

The variational encoder maps the policy-latent trajectory to a posterior distribution over compact
programs:
\begin{equation}
q_{\phi}(m_{1:T_m}\mid z_{1:T_z})
=
\mathcal{N}
\left(
\mu_{\phi}(z_{1:T_z}),
\operatorname{diag}\left(\sigma_{\phi}^{2}(z_{1:T_z})\right)
\right).
\label{eq:posterior}
\end{equation}
A compact program $m_{1:T_m}$ is sampled using the reparameterization trick and decoded back
into executable BFM latents:
\begin{equation}
\hat z_{1:T_z}=D_{\theta}(m_{1:T_m}).
\label{eq:decoder}
\end{equation}
The reconstruction loss, which incorporates a policy‑level reconstruction term, is defined as
\begin{equation}
\mathcal{L}_{\mathrm{rec}}
=
\frac{1}{T_z}
\sum_{t=1}^{T_z}
\|z_t-\hat z_t\|_2^2 +  \frac{\lambda_\pi}{T_z}
\sum_{t=1}^{T_z}  D_{\mathrm{KL}}
\left(\pi_{\mathrm{BFM}}(\cdot \mid s_t, \hat z_t)
\,\|\, \pi_{\mathrm{BFM}}(\cdot \mid s_t,z_t)
\right).
\label{eq:rec_loss}
\end{equation}
The compact program space is regularized with a standard Gaussian prior:
\begin{equation}
\mathcal{L}_{\mathrm{KL}}
=
D_{\mathrm{KL}}
\left(
q_{\phi}(m_{1:T_m}\mid z_{1:T_z})
\,\|\, \mathcal{N}(0,I)
\right).
\label{eq:kl_loss}
\end{equation}

To make the bottleneck language-discriminative, we use a motion-text contrastive objective with
frame-token matching. Let $m_i=\{m_{i,t}\}_{t=1}^{T_m}$ denote the compact program for sample
$i$, and let $Y_i=\{y_{i,k}\}_{k=1}^{K_i}$ denote its token-level text representation. Motion-program
tokens and text tokens are projected into a shared normalized space:
\begin{equation}
\tilde m_{i,t}
=
\frac{P_m(m_{i,t})}{\|P_m(m_{i,t})\|_2},
\qquad
\tilde y_{j,k}
=
\frac{P_y(y_{j,k})}{\|P_y(y_{j,k})\|_2}.
\label{eq:projection}
\end{equation}
For each motion-text pair $(i,j)$ in a batch, we compute a frame-level score by pooling over text
tokens:
\begin{equation}
F_{ijt}
=
\lambda_{\mathrm{tok}}
\log
\left(
\frac{1}{K_j}
\sum_{k=1}^{K_j}
\exp
\left(
\frac{\tilde m_{i,t}^{\top}\tilde y_{j,k}}{\lambda_{\mathrm{tok}}}
\right)
\right),
\label{eq:token_pooling}
\end{equation}
where $\lambda_{\mathrm{tok}}$ controls the sharpness of token pooling. The normalization by $K_j$
prevents longer text descriptions from receiving systematically larger scores.

We then compute frame-importance weights from the frame scores:
\begin{equation}
w_{ijt}
=
\frac{
\exp(F_{ijt}/\lambda_{\mathrm{frm}})
}{
\sum_{u=1}^{T_m}
\exp(F_{iju}/\lambda_{\mathrm{frm}})
},
\label{eq:frame_weights}
\end{equation}
where $\lambda_{\mathrm{frm}}$ controls how concentrated the frame weighting is. The final similarity
between motion program $i$ and text $j$ is
\begin{equation}
R_{ij}
=
\sum_{t=1}^{T_m}
w_{ijt}F_{ijt}.
\label{eq:similarity}
\end{equation}

Let $\gamma=\exp(\alpha)$ be a positive learnable logit scale. For a batch of $B$ paired
motion-text examples, the bidirectional contrastive loss is
\begin{equation}
\mathcal{L}_{m\rightarrow Y}
=
-\frac{1}{B}
\sum_{i=1}^{B}
\log
\frac{
\exp(\gamma R_{ii})
}{
\sum_{j=1}^{B}\exp(\gamma R_{ij})
},
\qquad
\mathcal{L}_{Y\rightarrow m}
=
-\frac{1}{B}
\sum_{i=1}^{B}
\log
\frac{
\exp(\gamma R_{ii})
}{
\sum_{j=1}^{B}\exp(\gamma R_{ji})
}.
\label{eq:contrastive}
\end{equation}
The semantic alignment loss is
\begin{equation}
\mathcal{L}_{\mathrm{sem}}
=
\frac{1}{2}
\left(
\mathcal{L}_{m\rightarrow Y}
+
\mathcal{L}_{Y\rightarrow m}
\right).
\label{eq:semantic_loss}
\end{equation}
This local matching objective encourages the compact program to align with the relevant text tokens
at different behavioral phases, rather than relying only on a single global text-motion similarity.

The full bottleneck objective is
\begin{equation}
\mathcal{L}_{\mathrm{VBB}}
=
\mathcal{L}_{\mathrm{rec}}
+
\beta\mathcal{L}_{\mathrm{KL}}
+
\lambda_{\mathrm{sem}}\mathcal{L}_{\mathrm{sem}}.
\label{eq:vbb_loss}
\end{equation}

After training the text-aligned behavioral bottleneck, we train a conditional generator in the compact
program space. For each training pair, we sample a target compact program
$m\sim q_{\phi}(m\mid z_{1:T_z})$ and a noise sample $\epsilon\sim\mathcal{N}(0,I)$ with the same
shape as $m$. We use flow matching with the linear interpolation path
\begin{equation}
m(r)=(1-r)\epsilon + r m,
\qquad r\sim\mathcal{U}(0,1).
\label{eq:linear_path}
\end{equation}
The text-conditioned vector field $v_{\eta}$ is trained to predict the constant flow from $\epsilon$
to $m$:
\begin{equation}
\mathcal{L}_{\mathrm{FM}}
=
\mathbb{E}_{\epsilon,m,r,Y}
\left[
\left\|
v_{\eta}(m(r),r,Y)
-
(m-\epsilon)
\right\|_2^2
\right].
\label{eq:flow_matching}
\end{equation}
Thus, the flow model learns to map Gaussian noise to a text-consistent compact behavioral program.
Training proceeds in three stages. First, we use a pretrained BFM and keep it frozen. Second, for
each paired text-motion example $(Y^i,\tau^i)$, we infer the policy-latent trajectory
$z^i_{1:T_z}$ and train the encoder, decoder, and semantic projection modules using
$\mathcal{L}_{\mathrm{VBB}}$. Third, we freeze the bottleneck decoder and train the conditional flow
model using $\mathcal{L}_{\mathrm{FM}}$.
At inference time, given a text instruction $Y$ and an initial humanoid state $s_1$ fixed by the
generation protocol, we sample $m(0)\sim\mathcal{N}(0,I)$ and solve the flow ODE
\begin{equation}
\frac{dm(r)}{dr}
=
v_{\eta}(m(r),r,Y),
\qquad r\in[0,1],
\label{eq:flow_ode}
\end{equation}
to obtain a generated compact program $m(1)$. The decoder maps this program to executable policy
latents $\hat z_{1:T_z}=D_{\theta}(m(1))$. The final motion is produced by rolling out the frozen
BFM policy using $\hat z_t$ at each timestep.

\section{Experiments}

\subsection{Experimental Setup}

We evaluate \textbf{Text2BFM} on two standard text-to-motion benchmarks: HumanML3D~\citep{guo2022humanml3d} and KIT-ML~\citep{plappert2016kit}. HumanML3D contains $14{,}616$ motion sequences with $44{,}970$ text descriptions, and KIT-ML contains $3{,}911$ motions with $6{,}278$ annotations. We follow the standard train/test splits and evaluation protocol used in prior work~\citep{zhang2022motiondiffuse, tevet2023mdm, chen2023mld, zhang2023t2mgpt, mo2023momask}. For all stochastic evaluations of our method, we report the mean and $95\%$ confidence interval over $20$ independent evaluation runs with different random seeds.
We report the standard metrics used in text-to-motion evaluation: \textbf{R-Precision} for text-motion retrieval accuracy, \textbf{FID} for distributional similarity to real motions, \textbf{MultiModal Distance} for text-motion feature alignment, and \textbf{MultiModality} for generation diversity under the same prompt.


The BFM policy is pretrained on the HY-Motion dataset~\citep{hy2024hymotion} following the MetaMotivo training procedure~\citep{metamotivo}, and is kept frozen during text-to-motion training.    On a single NVIDIA A100 80GB GPU, behavioral bottleneck training takes $17$ hours and text-to-plan generator training takes $28$ hours. BFM latent extraction time is reported separately once preprocessing is finalized. The full list of hyperparameters and provided in Table \ref{tab:core_arch}.

\begin{table*}[hbt!]
\centering
\caption{Quantitative comparison on HumanML3D and KIT-ML datasets. Metrics: R-Precision Top-3 and MultiModality (the higher, the better), FID and MultiModal Distance (the lower, the better). Best values are highlighted in \textcolor{LightBlue}{blue}.}
\label{tab:main_results}
\setlength{\tabcolsep}{3.3pt}
\begin{tabular}{llcccc}
\toprule
\textbf{Dataset} & \textbf{Method} & \textbf{R-Prec. Top-3} $\uparrow$ & \textbf{FID} $\downarrow$ & \textbf{MM-Dist} $\downarrow$ & \textbf{MultiModality} $\uparrow$ \\
\midrule
\multirow{9}{*}{HumanML3D}
& TM2T\citep{guo2022tm2t} & $0.729\pmt{.002}$ & $1.501\pmt{.017}$ & $3.467\pmt{.011}$ & $2.424\pmt{.093}$ \\
& T2M\citep{guo2022humanml3d} & $0.736\pmt{.002}$ & $1.087\pmt{.021}$ & $3.347\pmt{.008}$ & $2.219\pmt{.074}$ \\
& MDM\citep{tevet2023mdm} & $0.611\pmt{.007}$ & $0.544\pmt{.044}$ & $5.566\pmt{.027}$ & \textbf{\textcolor{LightBlue}{$2.799\pmt{.072}$}} \\
& MLD\citep{chen2023mld} & $0.772\pmt{.002}$ & $0.473\pmt{.013}$ & $3.196\pmt{.010}$ & $2.413\pmt{.079}$ \\
& MotionDiffuse\citep{zhang2022motiondiffuse} & $0.782\pmt{.001}$ & $0.630\pmt{.001}$ & $3.113\pmt{.001}$ & $1.553\pmt{.042}$ \\
& T2M-GPT\citep{zhang2023t2mgpt} & $0.775\pmt{.002}$ & $0.141\pmt{.005}$ & $3.121\pmt{.009}$ & $1.831\pmt{.048}$ \\
& ReMoDiffuse\citep{zhang2023remodiffuse} & $0.795\pmt{.004}$ & $0.103\pmt{.004}$ & $2.974\pmt{.016}$ & $1.795\pmt{.043}$ \\
& MoMask\citep{guo2024momask} & $0.807\pmt{.002}$ & \textbf{\textcolor{LightBlue}{$0.045\pmt{.002}$}} & $2.958\pmt{.008}$ & $1.241\pmt{.040}$ \\
& \textbf{Text2BFM} 
& \textbf{\textcolor{LightBlue}{$0.876\pmt{.005}$}} 
& $1.172\pmt{.013}$ 
& \textbf{\textcolor{LightBlue}{$2.498\pmt{.061}$}} 
& $1.233\pmt{.085}$ \\
\midrule
\multirow{9}{*}{KIT-ML}
& TM2T\citep{guo2022tm2t} & $0.587\pmt{.005}$ & $3.599\pmt{.153}$ & $4.591\pmt{.026}$ & \textbf{\textcolor{LightBlue}{$3.292\pmt{.081}$}} \\
& T2M\citep{guo2022humanml3d} & $0.681\pmt{.007}$ & $3.022\pmt{.107}$ & $3.488\pmt{.028}$ & $2.052\pmt{.107}$ \\
& MDM\citep{tevet2023mdm} & $0.396\pmt{.004}$ & $0.497\pmt{.021}$ & $9.191\pmt{.022}$ & $1.907\pmt{.214}$ \\
& MLD\citep{chen2023mld} & $0.734\pmt{.007}$ & $0.404\pmt{.027}$ & $3.204\pmt{.027}$ & $2.192\pmt{.071}$ \\
& MotionDiffuse\citep{zhang2022motiondiffuse} & $0.739\pmt{.004}$ & $1.954\pmt{.062}$ & $2.958\pmt{.005}$ & $0.730\pmt{.013}$ \\
& T2M-GPT \citep{zhang2023t2mgpt} & $0.745\pmt{.006}$ & $0.514\pmt{.029}$ & $3.007\pmt{.023}$ & $1.570\pmt{.039}$ \\
& ReMoDiffuse \citep{zhang2023remodiffuse} & $0.765\pmt{.055}$ & \textbf{\textcolor{LightBlue}{$0.155\pmt{.006}$}} & $2.814\pmt{.012}$ & $1.239\pmt{.028}$ \\
& MoMask \citep{guo2024momask} & $0.781\pmt{.005}$ & $0.204\pmt{.011}$ & $2.779\pmt{.022}$ & $1.131\pmt{.043}$ \\
& \textbf{Text2BFM} 
& \textbf{\textcolor{LightBlue}{$0.901\pmt{.008}$}} 
& $1.482\pmt{.098}$ 
& \textbf{\textcolor{LightBlue}{$2.658\pmt{.074}$}} 
& $1.367\pmt{.103}$ \\
\bottomrule
\end{tabular}
\end{table*}

\subsection{Main Results}

Table~\ref{tab:main_results} compares Text2BFM with representative text-to-motion methods. Prior approaches typically synthesize motion directly in pose space, kinematic latent spaces, or discrete motion-token spaces. In contrast, Text2BFM first generates compact behavioral programs, then decodes them into BFM policy latents, and finally realizes the motion through policy rollout.

This design decouples high-level semantic planning from low-level motion execution. Generating in the compact program space alleviates the burden of modeling long pose sequences, while the frozen BFM policy provides an executable prior that ensures temporal coherence and supports contact-rich motion. As a result, Text2BFM achieves the best R-Precision and MultiModal Distance scores on both datasets. However, the BFM policy and its environment-specific action space introduce a domain bias; this leads to FID values that are not optimal, as the distribution of the generated motions can deviate from the real motion data distribution.

\subsection{Compositional Motion Evaluation}

Long prompts often describe sequences of behaviors rather than a single action, e.g., ``walk forward, then turn left, then sit down.'' To evaluate this setting, we construct compositional prompts from HumanML3D by combining atomic text descriptions into multi-stage prompts. Given $N$ atomic instructions $\{Y^{(n)}\}_{n=1}^{N}$, we form
\begin{equation}
Y_{\mathrm{comp}}
=
Y^{(1)} \ \text{then}\ Y^{(2)} \ \text{then}\ \cdots \ \text{then}\ Y^{(N)} .
\end{equation}
We evaluate $N\in\{3,4\}$ stages and select semantically plausible action sequences, such as walk-turn-sit or reach-pick-place. 
 \textbf{Text2BFM} generates a single compact program from the full prompt $Y_{\mathrm{comp}}$. \textbf{Text2BFM-Compose} decomposes the prompt into clauses and generates one compact program per clause. For each clause $Y^{(n)}$, we sample
$
\hat m^{(n)} \sim p_{\eta}(m \mid Y^{(n)}),
\,
\hat z^{(n)}_{1:T_n}
=
D_{\theta}(\hat m^{(n)}).
$
The decoded policy-latent sequences are then concatenated:
\begin{equation}
\hat z^{\mathrm{comp}}_{1:T}
=
\hat z^{(1)}_{1:T_1}
\oplus
\hat z^{(2)}_{1:T_2}
\oplus
\cdots
\oplus
\hat z^{(N)}_{1:T_N}.
\end{equation}
To reduce discontinuities at clause boundaries, we blend a short overlap of $O$ latent steps. For the boundary between stages $n$ and $n+1$, the blended overlap is
\begin{equation}
\tilde z_o
=
(1-\rho_o)\hat z^{(n)}_{T_n-O+o}
+
\rho_o \hat z^{(n+1)}_{o},
\qquad
\rho_o=\frac{o}{O+1},
\qquad
o=1,\ldots,O .
\end{equation}
The boundary sequence is formed by replacing the overlapping latents with
$\{\tilde z_o\}_{o=1}^{O}$ before rollout. The final composed latent sequence is executed by the frozen BFM policy. This variant tests whether complex motions can be assembled from local behavior representations rather than generated as a single long pose sequence. To support our findings, we compare proposed approaches with a recent framework Kimodo \citep{Kimodo2026}. Figure \ref{fig:composition_cmp} showcases that Kimodo is unable to follow text specification precisely. Text2BFM-Compose covers more prompt stages and produces smoother boundaries in this example, although long $N=4$ compositions remain challenging quantitatively.

\begin{figure}[hbt]
    \centering
    \begin{tabular}{@{} c r @{}}
        \adjustbox{valign=c}{Kimodo}
            & \includegraphics[width=0.7\linewidth, valign=c]{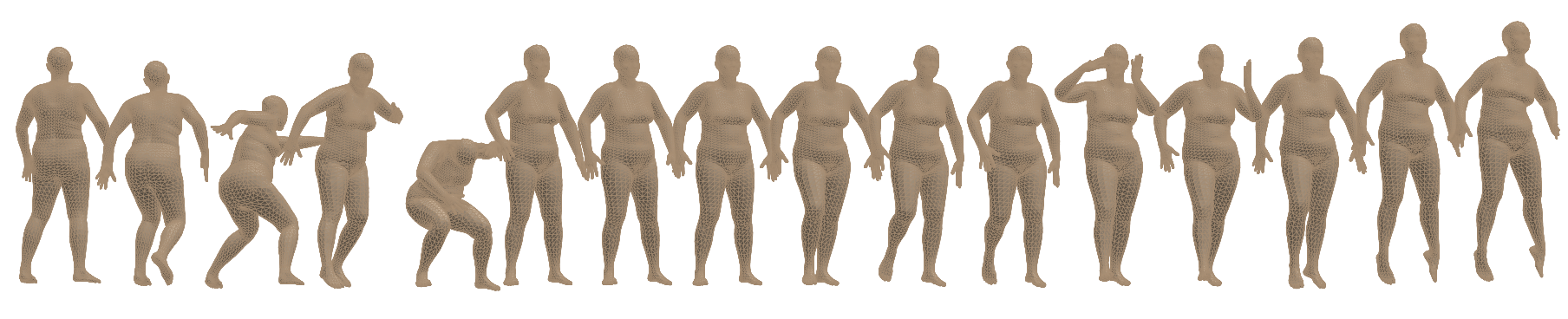} \\[4pt]
        \adjustbox{valign=c}{Text2BFM}
            & \includegraphics[width=0.7\linewidth, valign=c]{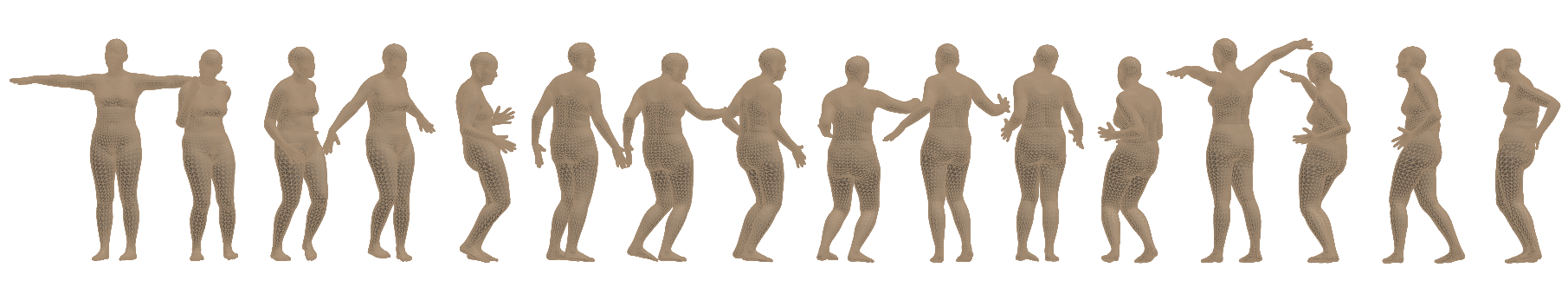} \\[4pt]
        \adjustbox{valign=c}{Text2BFM-Compose}
            & \includegraphics[width=0.7\linewidth, valign=c]{figures/composition_mesh.png}   \\
    \end{tabular}
    \caption{Comparison of methods for compositional motion generation. The text prompt is ``A person runs forward, then turns left, does a quick, sharp punch, and 
follows it with a kick to the right. After that, he walks forward, 
raises his arms in victory, and finishes by waving happily''.}
    \label{fig:composition_cmp}
\end{figure}

To quantitatively measure and compare performances of different methods, we perform the following procedure. Generated motions are divided into $N$ segments, using known composition boundaries for Text2BFM-Compose and uniform segments otherwise. \textbf{Order Accuracy} measures whether the best-matching clause for each generated segment follows the correct temporal order. Order Accuracy is computed by embedding each generated segment and each clause using a frozen embedder. For each segment, we assign the clause with the highest similarity. The order is correct if the assigned clause indices are strictly increasing and match the target sequence. \textbf{Transition Score} measures the smoothness around the action boundaries as the average velocity discontinuity around each clause boundary:
\begin{equation}
\mathrm{Transition}
=
\frac{1}{N-1}
\sum_{i=1}^{N-1}
\left(
\|s_{i+1}-s_i\|_2
+ \|v_{i+1}-v_i\|_2
\right).
\end{equation}

\begin{table}[t]
\centering
\caption{Evaluation on longer compositional prompts with $N\geq 3$ stages. Longer prompts test whether the model preserves semantic ordering over extended horizons.}
\label{tab:composition_long}
\begin{tabular}{lcccc}
\toprule
\textbf{Method} & \textbf{$N$} & \textbf{Order Acc.} $\uparrow$ & \textbf{MM-Dist} $\downarrow$ & \textbf{Transition} $\downarrow$ \\
\midrule
Text2BFM & $3$ & 0.395 &  4.476 & 0.222 \\
\textbf{Text2BFM-Compose} & $3$ & \textcolor{LightBlue}{0.671} & \textcolor{LightBlue}{3.421} & \textcolor{LightBlue}{0.047} \\
\midrule
Text2BFM & $4$ & 0.322 &  6.579 & 0.237 \\
\textbf{Text2BFM-Compose} & $4$ & \textcolor{LightBlue}{0.509} &  \textcolor{LightBlue}{4.61} & \textcolor{LightBlue}{0.037} \\
\bottomrule
\end{tabular}
\end{table}
The compositional benchmark highlights a difference between generating poses and generating executable behavior programs. Pose-space generators must model the full long-horizon trajectory and all transitions directly from a single text embedding. In contrast, Text2BFM represents each phase through local executable latents, making it easier to preserve the identity and order of individual actions. The explicit composition variant further benefits from the modularity of the compact program space: each clause is first mapped to a behavioral program, and the resulting policy-latent sequences are combined before rollout.


\subsection{Ablation Study}

We ablate the main components of Text2BFM: compression of BFM policy-latent sequences, semantic alignment, and the compact-program generator. All variants use the same frozen BFM policy and differ only in the text-to-program pipeline.

\paragraph{Latent sequence compression.}
We first study how the BFM latent trajectory $z_{1:T}$ is compressed into a compact program. We compare four variants: no compression, average pooling, a variational behavioral bottleneck without semantic alignment, and the full text-aligned bottleneck. The average pooling operation directly follows from the conventional BFM objective,
$
m_{\mathrm{avg}}
=
1/T
\sum_{t=1}^{T} z_t ,
$
which removes temporal ordering.  The full model uses the proposed variational bottleneck with KL regularization and motion-text semantic alignment. Table~\ref{tab:ablation_compression} shows that preserving temporal structure is important: average pooling performs poorly because it collapses the latent trajectory into an unordered summary. Generating a full latents sequence without compression improves the motion quality, but does not organize the latent space according to language. Adding the variational bottleneck improves performance, and semantic alignment further improves text-motion matching, as reflected by lower MM-Dist and higher R-Precision.

\begin{table}[t]
\centering
\caption{Ablation on policy-latent sequence compression. All variants use the same frozen BFM policy and the same evaluation protocol. Best values are highlighted in \textcolor{LightBlue}{blue}.}
\label{tab:ablation_compression}
\begin{tabular}{lccc}
\toprule
\textbf{Compression} & \textbf{FID} $\downarrow$ & \textbf{MM-Dist} $\downarrow$ & \textbf{R-Prec.} $\uparrow$ \\
\midrule
Average pooling 
& $2.373\,{\color{gray}\scriptstyle \pm .010}$ 
& $3.573\,{\color{gray}\scriptstyle \pm .006}$ 
& $0.786\,{\color{gray}\scriptstyle \pm .070}$ \\

No Compression 
& $1.689\,{\color{gray}\scriptstyle \pm .007}$ 
& $2.869\,{\color{gray}\scriptstyle \pm .006}$ 
& $0.743\,{\color{gray}\scriptstyle \pm .004}$ \\

VBB w/o semantic loss 
& $1.272\,{\color{gray}\scriptstyle \pm .007}$ 
& $2.832\,{\color{gray}\scriptstyle \pm .025}$ 
& $0.863\,{\color{gray}\scriptstyle \pm .003}$ \\

\textbf{Text-aligned VBB} 
& ${\textcolor{LightBlue}{1.172}}\,{\color{gray}\scriptstyle \pm .013}$ 
& ${\textcolor{LightBlue}{2.498}}\,{\color{gray}\scriptstyle \pm .061}$ 
& ${\textcolor{LightBlue}{0.877}}\,{\color{gray}\scriptstyle \pm .005}$ \\
\bottomrule
\end{tabular}
\end{table}

\paragraph{Effect of semantic alignment.}
The comparison between VAE compresson w/o semantic loss ($\mathcal{L}_\text{sem}$) and the full model isolates the role of the motion-text contrastive objective (Table \ref{tab:ablation_compression}). Without this loss, the compact program is optimized mainly to reconstruct BFM latents, but its geometry is not explicitly aligned with text. The full model improves semantic retrieval and text-motion distance, suggesting that language alignment makes behavior representations $\pi_z$ more suitable for text-conditioned generation.

\paragraph{Compression factors.}
We evaluate how the compression factor of the behavioral encoder affects reconstruction of the BFM policy-latent trajectory. Given $z_{1:T}$, the encoder produces a shorter compact program, and the decoder reconstructs $\hat z_{1:T}$. We compare compression factors $\{4,8,16\}$ using two metrics: latent reconstruction error and policy-level Action KL. The latter measures whether the reconstructed latents induce similar action distributions under the frozen BFM policy:
\begin{equation}
\mathrm{ActionKL}
=
\frac{1}{T}
\sum_{t=1}^{T}
D_{\mathrm{KL}}
\left(
\pi_{\mathrm{BFM}}(\cdot \mid s_t, \hat z_t)
\,\|\, 
\pi_{\mathrm{BFM}}(\cdot \mid s_t, z_t)
\right).
\end{equation}
\begin{table}[t]
\centering
\caption{Effect of temporal compression on BFM latent reconstruction and policy-level behavior preservation. Lower is better.}
\label{tab:compression_ablation}
\begin{tabular}{ccc}
\toprule
\textbf{Compression} & \textbf{Reconstruction} $\downarrow$ & \textbf{Action KL} $\downarrow$ \\
\midrule
$4\times$  & 0.187 & 1.04 \\
$8\times$  & 0.194 & 1.05 \\
$16\times$ & 0.342 & 2.02 \\
\bottomrule
\end{tabular}
\end{table}
\noindent Table~\ref{tab:compression_ablation} shows a trade-off between compactness and fidelity. The $4\times$ and $8\times$ models preserve both latent reconstruction and induced policy behavior, while $16\times$ compression substantially increases Action KL. We therefore use the compression factor that best balances compact behaviors $\pi_z$ length and policy-level preservation.

\paragraph{Flow Matching VS Diffusion.}
Table \ref{tab:ablation_generator} provides comparison of the main generation backbone. Flow matching improves FID and MultiModal Distance and requires fewer sampling steps, while diffusion obtains slightly higher R-Precision.

\begin{table}[ht]
\centering
\caption{Ablation of the choice of the underlying generator. Both variants operate in the learned behavioral bottleneck space.}
\label{tab:ablation_generator}
\begin{tabular}{lcccc}
\toprule
\textbf{Generator} & \textbf{FID} $\downarrow$ & \textbf{MM-Dist} $\downarrow$ & \textbf{R-Prec.} $\uparrow$ & \textbf{Sampling Steps} $\downarrow$ \\
\midrule
Diffusion in $\pi_z$-space 
& $1.672\pmt{.011}$ 
& $2.617\pmt{.064}$ 
& \textbf{\textcolor{LightBlue}{$0.885\pmt{.003}$}} 
& $50$ \\
\textbf{Flow matching in $\pi_z$-space} 
& \textbf{\textcolor{LightBlue}{$1.172\pmt{.013}$}} 
& \textbf{\textcolor{LightBlue}{$2.498\pmt{.061}$}} 
& $0.8762\pmt{.005}$ 
& \textbf{\textcolor{LightBlue}{16}} \\
\bottomrule
\end{tabular}
\end{table}





\subsection{Conclusion \& Limitations}
In the current work we presented Text2BFM - a text to motion generation model that aligns pretrained BFM policy representation space together with textual descriptions. This enables motion generation for long, composite textual descriptions. 
However, Text2BFM inherits both the strengths and limitations of the pretrained BFM (morphology, simulator dynamics, action space, and training-data bias). If requested behavior is outside the BFM latent space, the text-conditioned generator cannot reliably produce it. Consequently, comparisons to methods trained only on HumanML3D/KIT-ML are not equal-data comparisons. Finally, rare motions, acrobatic actions, and object-dependent interactions remain challenging when they are underrepresented in the motion data or not well captured by the BFM policy. Future work will explore joint refinement of the BFM and text-conditioned generator, explicit object-interaction modeling, and extensions to multi-character motion generation.


\paragraph{Code and reproducibility.}
We provide our code in the supplementary material. It contains the implementations used for the main experiments reported in the paper and is sufficient to reproduce the presented results.

\paragraph{Existing assets.}
We use HumanML3D\citep{guo2022humanml3d}, KIT-ML\citep{plappert2016kit},AMASS\citep{AMASS:ICCV:2019},MotionHub\citep{ling2024motionllama} and the public Metamotivo \citep{metamotivo} model according to their official terms of use and cite their original creators. We do not redistribute third-party datasets, baseline code, or checkpoints; baseline numbers are taken from the corresponding papers. Third-party software dependencies are listed in the supplementary code README. We refer to Appendix \ref{fig:composition_dataset} on how exactly dataset used for training was processed.

\paragraph{Broader impacts.}
Text2BFM may have positive societal impacts by making high-quality character animation more accessible for applications such as virtual avatars, games, education, assistive interfaces, and simulation-based robotics research. By generating motion through executable behavioral policies, it may also support safer prototyping of humanoid behaviors in simulation before physical deployment. Potential negative impacts include deceptive avatar animation and unsafe transfer of simulated behaviors to real robots without validation.


\clearpage
\bibliographystyle{plainnat}
\bibliography{references}

\newpage
\appendix

\section{Technical Details and Hyperparameters}
\label{sec:tech_details}

Our method is trained in two stages: 1) semantic latent motion VAE pretraining, and 2) text-to-latent generation ($text \rightarrow m \rightarrow z$).

\paragraph{Stage 1: Semantic VAE.}
The backbone is a causal 1D convolutional encoder-decoder with residual temporal blocks and hierarchical downsampling. The temporal axis is compressed by a factor of $8$. The latent sequence $m$ has stochastic posterior parameterization ($\mu,\log\sigma^2$) and reparameterized sampling.

\paragraph{Stage 2: Text-to-latent generator.}
A pretrained FB backbone is adapted to predict latent motion sequences conditioned on text. Optimization uses composite generation/reconstruction losses (including consistency in both $m$ and $z$ spaces).

\paragraph{Compute resources.}
All experiments were run on a single NVIDIA A100 80GB GPU with 32 CPU cores and 256GB system RAM. Offline BFM policy-latent extraction took approximately 3 hours for HumanML3D and 1 hour for KIT-ML. Training the behavioral bottleneck and text-to-plan flow generator took 17 and 28 hours, respectively, and evaluation took approximately 2--3 hours across HumanML3D and KIT-ML, including repeated sampling for confidence intervals. A full reproduction of the main Text2BFM results requires approximately 50 A100 GPU-hours, excluding dataset download and environment setup.

\begin{table*}[hbt]
\centering
\caption{Core architecture and hyperparameters.}
\label{tab:core_arch}
\begin{tabular}{lll}
\hline
Category & Hyperparameter & Value \\
\hline
VAE & Batch size & 256 \\
VAE & Input width & 256 \\
VAE & Latent dim $d_m$ & 48 \\
VAE & Latent ratio / min / max & 0.25 / 32 / 48 \\
VAE & Stride per level & 2 \\
VAE & Channel width / residual depth & 256 / 3 \\
VAE & Dilation growth rate & 3 \\
VAE & Activation / norm & ReLU / GroupNorm \\
VAE & Residual dropout & 0.1 \\
VAE & Padding mode & replicate \\
VAE & Learning rate / weight decay & $5\times10^{-5}$ / $5\times10^{-4}$ \\
VAE & Loss coef. $\lambda_\pi$ & 0.1 \\
VAE & Loss coef. $\lambda_\text{sem}$ & 0.35 \\
VAE & KL regularization $\beta$ & $10^{-4}$ \\
\hline
Generator  & Batch size & 256 \\
Generator  & Learning rate / weight decay & $3\times10^{-5}$ / $5\times10^{-4}$ \\
Generator  & LR warmup / start factor & 2000 / 0.1 \\
Generator  & Min LR ratio / LR decay steps & 0.1 / 15000 \\
Generator  & Early stopping patience / min delta & 12 / $10^{-4}$ \\
Generator  & Feature width / depth / heads & 512 / 8 / 8 \\
Generator  & Dropout / MLP ratio & 0.1 / 4.0 \\
Generator  & MLP activation & \texttt{gelu\_tanh} \\
Generator  & QK normalization / QKV bias & \texttt{rms} / True \\
Generator  & Start token / long skip connection & False / False \\
\hline
Conditioning & Text token dim / context dim & 768 / 1024 \\
Conditioning & Text adapter (width / depth / heads) &  768 / 2 / 8 \\
Conditioning & Text adapter dropout & 0.1 \\
Conditioning & Temporal mask type / span & \texttt{narrowband} / 2.0 \\
Conditioning & Condition dropout probability & 0.2 \\
Conditioning & RoPE on single branch / time factor & True / 1.0 \\
Conditioning & Null context length & 1 \\
\hline
Sampling & Guidance scale & 1.5 \\
Sampling & Sampling steps / solver & 16 / \texttt{euler}
\end{tabular}
\end{table*}

\section{Theoretical Motivation for Text-Aligned Behavioral Compression}
\label{sec:theory}

This section motivates the compression of BFM policy-latent trajectories into compact text-aligned behavioral programs. The key point is that MetaMotivo-style latents are not arbitrary framewise codes. They are future-conditioned behavioral contexts obtained from a frozen backward representation and then projected to a normalized policy-latent space. Thus, a latent $z_t$ should be interpreted as a local command describing what behavior should unfold from state $s_t$, rather than as a direct encoding of the current pose.

For notational simplicity, we denote the policy-latent sequence length by $T$. In the main text, this corresponds to $T_z = T_{\mathrm{motion}} - 1$. Given a motion trajectory $s_{0:T}$, the tracked BFM pseudo-labels are constructed as
$$
\bar z_t
=
\frac{1}{H_t}
\sum_{k=0}^{H_t-1}
B_\psi(s_{t+1+k}),
\qquad
H_t=\min(L,T-t),
$$
followed by projection to the normalized latent sphere:
$$
z_t=\operatorname{Proj}_z(\bar z_t),
\qquad
\operatorname{Proj}_z(u)
=
\sqrt{d_z}\frac{u}{\|u\|_2}.
$$
Hence, $z_{1:T}$ is a temporally ordered curve on the sphere of radius $\sqrt{d_z}$. Its complexity is better described by its path length than by the ambient latent dimension. We measure this path length using total variation:
$$
\operatorname{TV}(z_{1:T})
=
\sum_{t=1}^{T-1}
\|z_{t+1}-z_t\|_2.
$$
Low total variation means that behavioral intent changes gradually. Such trajectories are naturally compressible: neighboring latents often communicate nearly identical control information to the frozen BFM policy.

The future-conditioned construction provides an explicit smoothing mechanism. When $H_t=L$, adjacent averages satisfy a telescoping identity:
$$
\bar z_{t+1}-\bar z_t
=
\frac{1}{L}
\left(
B_\psi(s_{t+L+1})-B_\psi(s_{t+1})
\right).
$$
Thus, most intermediate future terms cancel. If $B_\psi$ is $L_B$-Lipschitz and $\|\bar z_t\|_2\geq \rho>0$, then the projection map is $\frac{2\sqrt{d_z}}{\rho}$-Lipschitz on this region, yielding
$$
\|z_{t+1}-z_t\|_2
\leq
\frac{2\sqrt{d_z}L_B}{\rho L}
\|s_{t+L+1}-s_{t+1}\|_2.
$$
Consequently, if the motion has bounded future-window displacement,
$$
\sum_{t=1}^{T-1}
\|s_{t+L+1}-s_{t+1}\|_2
\leq
C_{\mathrm{motion}},
$$
then
$$
\operatorname{TV}(z_{1:T})
\leq
\frac{2\sqrt{d_z}L_B}{\rho L}
C_{\mathrm{motion}}.
$$
This shows that BFM latents inherit a smoothing bias from future averaging: rapid pose-level fluctuations are partially averaged out, while persistent changes in behavioral intent remain.

We now connect latent compression to rollout quality. Let the closed-loop BFM dynamics be
$$
s_{t+1}=F(s_t,z_t),
\qquad
F(s,z)=f(s,\pi_{\mathrm{BFM}}(s,z)),
$$
where $f$ is the environment dynamics and $\pi_{\mathrm{BFM}}$ is the frozen BFM policy. Assume $F$ is Lipschitz:
$$
\|F(s,z)-F(s',z')\|_2
\leq
L_s\|s-s'\|_2
+
L_z\|z-z'\|_2.
$$
Here, $L_s$ measures closed-loop sensitivity to state perturbations, while $L_z$ measures sensitivity to latent reconstruction error.

\begin{proposition}[Compact latent plans yield bounded rollout error]
\label{prop:compression_rollout_short}
Let $z_{1:T}$ be a BFM policy-latent trajectory with total variation
$$
V=\operatorname{TV}(z_{1:T}).
$$
For any $m\geq 2$, there exists a piecewise-constant approximation $\tilde z_{1:T}$ with at most $m$ contiguous segments such that
$$
\max_t \|z_t-\tilde z_t\|_2
\leq
\frac{V}{m-1}.
$$
Let $s_{1:T}$ and $\tilde s_{1:T}$ be rollouts from the same initial state using $z_{1:T}$ and $\tilde z_{1:T}$, respectively. Then
$$
\|\tilde s_t-s_t\|_2
\leq
L_z
\frac{V}{m-1}
\sum_{j=1}^{t-1}
L_s^{t-1-j}.
$$
In particular, if $L_s<1$, then
$$
\|\tilde s_t-s_t\|_2
\leq
\frac{L_z V}{(m-1)(1-L_s)}.
$$
\end{proposition}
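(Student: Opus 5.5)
The proof has two independent parts: a combinatorial construction of the piecewise-constant approximation, and a discrete Gr\"onwall-type recursion for the rollout error driven by the Lipschitz assumption on $F$.

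\textbf{Construction of $\tilde z$.} Set $\varepsilon = V/(m-1)$. If $V=0$ the trajectory is constant and $\tilde z=z$ works, so assume $V>0$. Partition $\{1,\dots,T\}$ greedily by cumulative variation. Let $c_t=\sum_{j=1}^{t-1}\|z_{j+1}-z_j\|_2$, so that $c_1=0$, $c_T=V$, and $c_t$ is nondecreasing. For $k=1,\dots,m$, let segment $k$ consist of the indices $t$ with
$$
c_t\in\bigl[(k-1)\varepsilon,\,k\varepsilon\bigr),
$$
placing $c_t=V=(m-1)\varepsilon$ into the last nonempty bin. Because $c$ is monotone, each segment is contiguous. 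Since $c_t\le (m-1)\varepsilon$, at most $m$ bins are ever used. On each segment, set $\tilde z_t=z_{\tau_k}$, where $\tau_k$ is the first index of the segment. For any $t$ in segment $k$, the triangle inequality gives
$$
\|z_t-z_{\tau_k}\|_2\le c_t-c_{\tau_k}<\varepsilon ,
$$
and the boundary case $c_t=V$ still yields at most $\varepsilon$. This establishes the first claim. One could refine this with midpoints to obtain $\varepsilon/2$, but that is not needed.

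\textbf{Rollout bound.} Let $e_t=\|\tilde s_t-s_t\|_2$, with $e_1=0$ since both rollouts start from the same state. The Lipschitz assumption on $F$ gives
$$
e_{t+1}\le L_s e_t+L_z\|\tilde z_t-z_t\|_2\le L_s e_t+L_z\varepsilon .
$$
Unrolling this recursion by induction on $t$ yields
$$
e_t\le L_z\varepsilon\sum_{j=1}^{t-1}L_s^{t-1-j},
$$
which is exactly the stated bound. When $L_s<1$, the finite geometric sum is bounded by $1/(1-L_s)$, giving the uniform-in-$t$ estimate.

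\textbf{Main obstacle.} The only delicate step is the bookkeeping in the partition: verifying that the number of segments is at most $m$ rather than $m+1$, including the edge case $c_T=V$ falling exactly on a bin boundary, and confirming that segments are contiguous. Both follow from the monotonicity of $c_t$ together with the closed-last-bin convention. Everything else is a routine induction.
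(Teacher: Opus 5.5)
Your proof is correct and follows the paper's argument. Both proofs partition the sequence into contiguous segments of internal variation at most $V/(m-1)$, freeze $\tilde z$ at each segment's first latent and apply the triangle inequality, then unroll $e_{t+1}\le L_s e_t + L_z\varepsilon$ from $e_1=0$ and bound the geometric sum. Your binning of the cumulative variation $c_t$ simply makes the partition explicit and checks the at-most-$m$ segment count that the paper leaves implicit; no special convention is needed for $c_t=V$, since it already lies in the half-open bin $k=m$, where the deviation is zero.
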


\begin{proof}
Let
$$
\delta=\frac{V}{m-1}.
$$
Partition the latent sequence into contiguous segments such that the total variation inside each segment is at most $\delta$. Define $\tilde z_t$ as the first latent vector in the segment containing $t$. If the segment begins at index $a$, then
$$
\|z_t-\tilde z_t\|_2
=
\|z_t-z_a\|_2
\leq
\sum_{j=a}^{t-1}
\|z_{j+1}-z_j\|_2
\leq
\delta.
$$
Thus,
$$
\max_t\|z_t-\tilde z_t\|_2
\leq
\frac{V}{m-1}.
$$

Let
$$
e_t=\|\tilde s_t-s_t\|_2.
$$
Since both rollouts begin from the same initial state, $e_1=0$. By Lipschitz continuity,
$$
e_{t+1}
=
\|F(\tilde s_t,\tilde z_t)-F(s_t,z_t)\|_2
\leq
L_s e_t
+
L_z\|\tilde z_t-z_t\|_2.
$$
Unrolling the recursion gives
$$
e_t
\leq
L_z
\sum_{j=1}^{t-1}
L_s^{t-1-j}
\|\tilde z_j-z_j\|_2.
$$
Using the uniform approximation bound yields
$$
e_t
\leq
L_z
\frac{V}{m-1}
\sum_{j=1}^{t-1}
L_s^{t-1-j}.
$$
If $L_s<1$, the geometric sum is bounded by $\frac{1}{1-L_s}$, giving
$$
e_t
\leq
\frac{L_z V}{(m-1)(1-L_s)}.
$$
\end{proof}

The proposition gives a conservative explanation for why short latent programs can remain executable. The relevant quantity is not the dimensionality of the latent vector, but the temporal variation of the latent curve. If behavioral intent changes slowly and the closed-loop system is stable, then a compressed latent program induces only controlled rollout error. A learned decoder is more expressive than the piecewise-constant construction in the proof: it can generate smooth curves, reuse motifs, model phase structure, and allocate capacity to transitions such as contacts or direction changes.

The variational behavioral bottleneck can be interpreted as a rate--distortion relaxation. Let $Z=z_{1:T}$ be the tracked latent trajectory and let $M$ be the compact behavioral program. An encoder $q_\phi(M|Z)$ and decoder $p_\theta(Z|M)$ optimize a tradeoff of the form
$$
\mathcal{L}_{\mathrm{VBB}}
=
\mathbb{E}_{q_\phi(M|Z)}
\left[
D(Z,\hat Z)
\right]
+
\beta
D_{\mathrm{KL}}
\left(
q_\phi(M|Z)
\|p(M)
\right).
$$
The reconstruction term preserves the information needed to recover executable BFM latents, while the KL term prevents $M$ from becoming a framewise copy of $Z$. The bottleneck therefore encourages the model to store reusable behavioral factors such as speed, direction, gait, contact structure, and style.

However, reconstruction alone does not guarantee that the compressed program preserves the aspects of motion that matter for language. A purely reconstructive code may spend capacity on details that improve latent reconstruction but are weakly discriminative under text. Semantic alignment biases the bottleneck toward language-relevant behavioral factors.

\begin{proposition}[Semantic alignment preserves text discrimination under a margin]
\label{prop:semantic_margin}
Let $e_Y$ be a unit-normalized embedding of the correct text prompt and let $e_M$ be a unit-normalized embedding of the corresponding compact behavioral program. Suppose
$$
1-e_Y^\top e_M
\leq
\eta.
$$
Let $e_Y^-$ be the unit-normalized embedding of an incorrect prompt. If
$$
e_Y^\top e_Y^-
\leq
1-\Delta
$$
and
$$
\Delta>\eta+\sqrt{2\eta},
$$
then
$$
e_M^\top e_Y
>
e_M^\top e_Y^-.
$$
\end{proposition}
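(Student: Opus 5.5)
The plan is to compare the two inner products $e_M^\top e_Y$ and $e_M^\top e_Y^-$ through the geometry of the unit sphere. Every vector is unit-norm, so the hypothesis $1-e_Y^\top e_M\le\eta$ converts directly into a distance bound:
$$
\|e_M-e_Y\|_2^2 = 2-2e_Y^\top e_M \le 2\eta,
\qquad\text{so}\qquad
\|e_M-e_Y\|_2\le\sqrt{2\eta}.
$$
This gives the lower bound $e_M^\top e_Y\ge 1-\eta$ for the correct pair.

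For the incorrect prompt, I would write $e_M=e_Y+(e_M-e_Y)$ and expand:
$$
e_M^\top e_Y^- = e_Y^\top e_Y^- + (e_M-e_Y)^\top e_Y^-.
$$
The first term is at most $1-\Delta$ by the margin assumption. The second term is bounded by Cauchy--Schwarz, using $\|e_Y^-\|_2=1$, as
$$
(e_M-e_Y)^\top e_Y^- \le \|e_M-e_Y\|_2 \le \sqrt{2\eta}.
$$
Together these give $e_M^\top e_Y^-\le 1-\Delta+\sqrt{2\eta}$.

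Combining the two bounds, it suffices that
$$
1-\eta > 1-\Delta+\sqrt{2\eta},
$$
which is exactly the assumed condition $\Delta>\eta+\sqrt{2\eta}$. The inequality is strict, so the conclusion $e_M^\top e_Y>e_M^\top e_Y^-$ is strict as well.

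None of the steps is technically hard. The one place needing care is the perturbation term, where the deviation must be controlled in norm (the $\sqrt{2\eta}$ scale) rather than through the inner-product gap $\eta$. This mismatch is why the hypothesis requires $\Delta>\eta+\sqrt{2\eta}$ and not merely $\Delta>2\eta$.
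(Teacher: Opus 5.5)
Your proposal is correct and follows essentially the same argument as the paper. Both convert the alignment hypothesis into $\|e_M-e_Y\|_2\le\sqrt{2\eta}$, split $e_M^\top e_Y^-=e_Y^\top e_Y^-+(e_M-e_Y)^\top e_Y^-$, bound the second term by Cauchy--Schwarz, and compare the result with $e_M^\top e_Y\ge 1-\eta$ to obtain a gap of at least $\Delta-\eta-\sqrt{2\eta}>0$.
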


\begin{proof}
Since $e_Y$ and $e_M$ are unit-normalized,
$$
\|e_M-e_Y\|_2^2
=
2-2e_M^\top e_Y.
$$
The alignment assumption gives
$$
\|e_M-e_Y\|_2
\leq
\sqrt{2\eta}.
$$
For the negative prompt,
$$
e_M^\top e_Y^-
=
e_Y^\top e_Y^-
+
(e_M-e_Y)^\top e_Y^-.
$$
By Cauchy--Schwarz,
$$
(e_M-e_Y)^\top e_Y^-
\leq
\sqrt{2\eta}.
$$
Therefore,
$$
e_M^\top e_Y^-
\leq
1-\Delta+\sqrt{2\eta}.
$$
Meanwhile,
$$
e_M^\top e_Y
\geq
1-\eta.
$$
Hence,
$$
e_M^\top e_Y-e_M^\top e_Y^-
\geq
\Delta-\eta-\sqrt{2\eta}.
$$
If $\Delta>\eta+\sqrt{2\eta}$, the right-hand side is positive, proving the claim.
\end{proof}

For multiple negatives $\{e_{Y_i^-}\}_{i=1}^N$ satisfying
$$
e_Y^\top e_{Y_i^-}
\leq
1-\Delta,
$$
the same argument gives
$$
e_M^\top e_Y-e_M^\top e_{Y_i^-}
\geq
\Delta-\eta-\sqrt{2\eta}.
$$
With contrastive logits scaled by temperature $\tau$, this implies the lower bound
$$
p(Y|M)
\geq
\frac{1}
{
1+
N\exp
\left(
-\frac{\Delta-\eta-\sqrt{2\eta}}{\tau}
\right)
}.
$$
Thus, reducing alignment error increases the contrastive separability of the compact behavioral program.

Together, these arguments justify text-aligned behavioral compression. Future-conditioned BFM latents form smooth behavioral curves; bounded-variation curves admit compact approximations; stable closed-loop execution converts small latent error into controlled rollout error; and semantic alignment ensures that the compressed program preserves the factors of motion that are discriminative under language. The resulting representation is therefore a compact semantic control program: low-rate enough to compress motion, executable enough to drive the frozen BFM policy, and aligned enough to support text-conditioned generation.

\section{Dataset Structure}
\begin{figure}[hbt]
    \centering
    \includegraphics[scale=0.4]{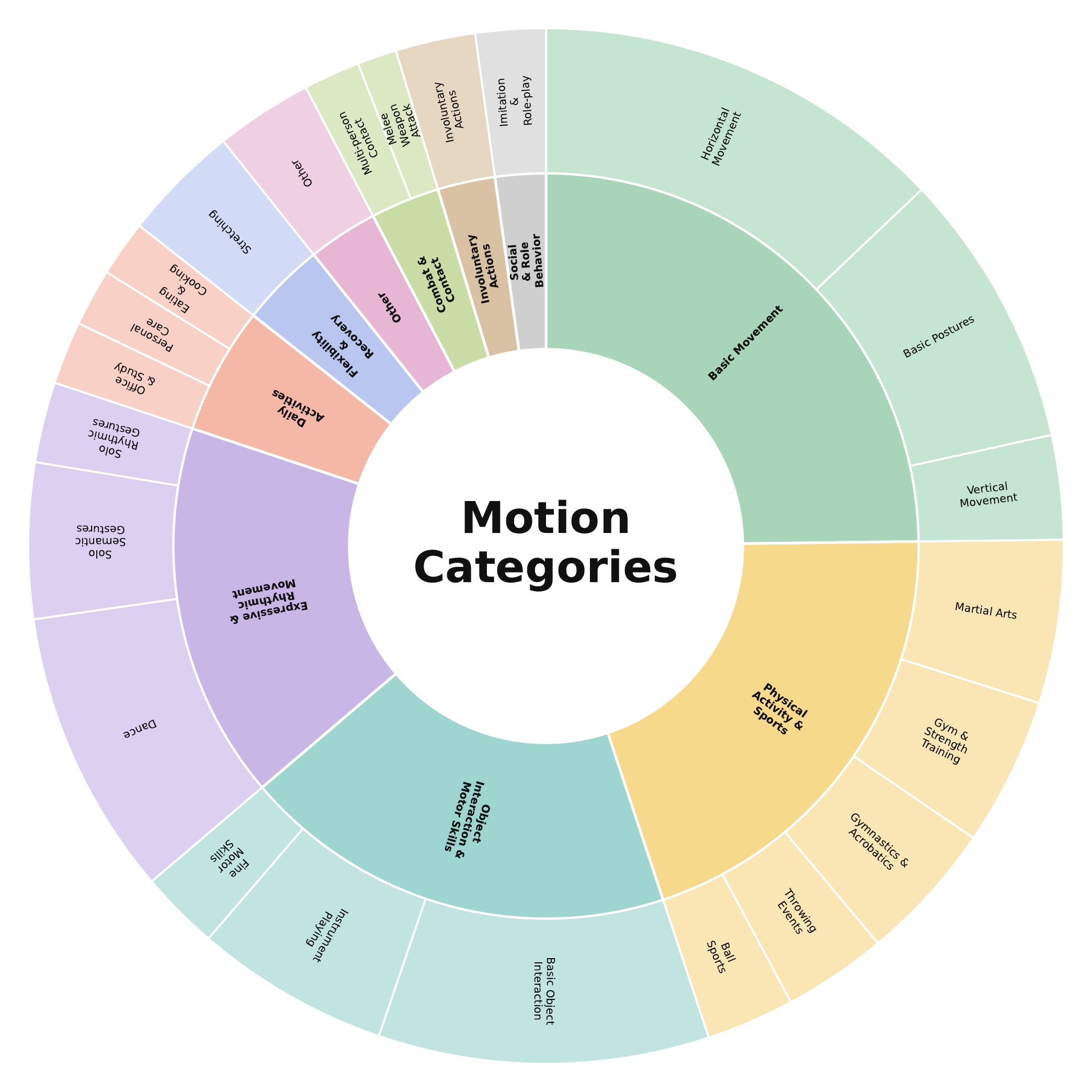} 
    \caption{Hierarchical visualization of motion categories in the dataset. The dataset was constructed from the AMASS \citep{AMASS:ICCV:2019} and MotionHub\citep{ling2024motionllama} datasets. The inner ring represents high-level motion domains, while the outer ring shows their corresponding subcategories. Segment sizes reflect the relative number of motion classes within each subgroup}
    \label{fig:composition_dataset}
\end{figure}

\newpage

\section{Additional examples of motion generation using our method}

\begin{figure*}[htbp]
\centering
\begin{tabular}{@{}p{\textwidth}@{}}
\centering
\includegraphics[width=\linewidth,keepaspectratio]{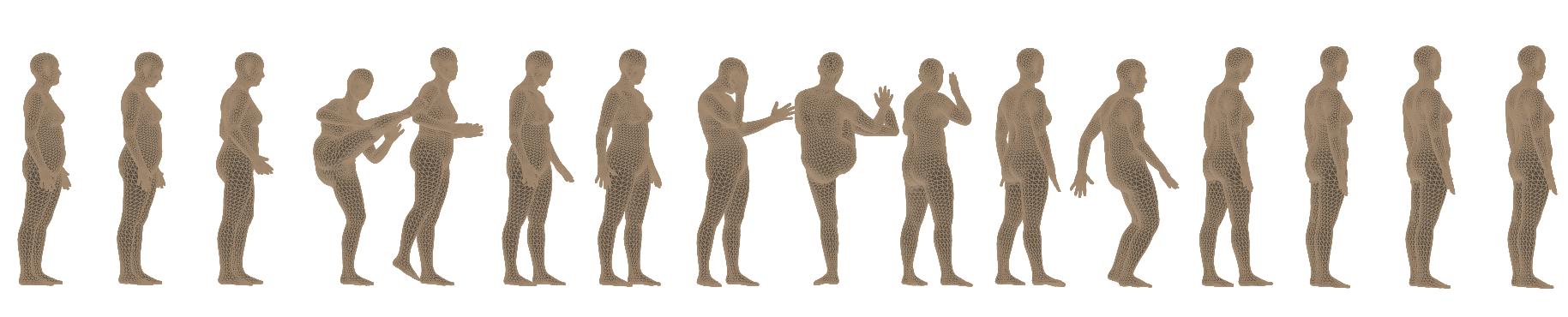}
\tabularnewline
\midrule
\raggedright
A person gets into a stance and kicks with his right leg. Then the person does a kick spin to the left, then kicks slowly with their left leg.
\tabularnewline

\centering
\includegraphics[width=\linewidth,keepaspectratio]{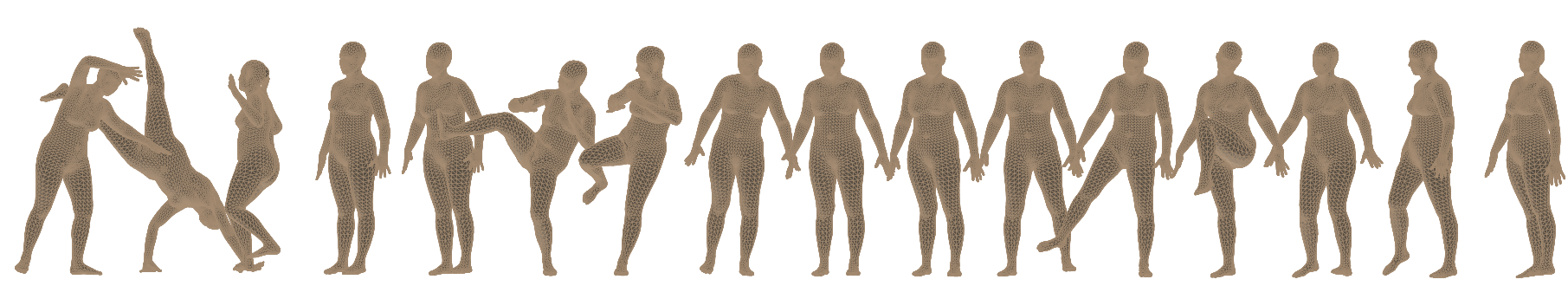}
\tabularnewline
\midrule
\raggedright
A person performing a clean cartwheel. Then a man bends his right leg then kicks it out in the air. Then the person kicking/swinging legs from side to side.
\tabularnewline

\centering
\includegraphics[width=\linewidth,keepaspectratio]{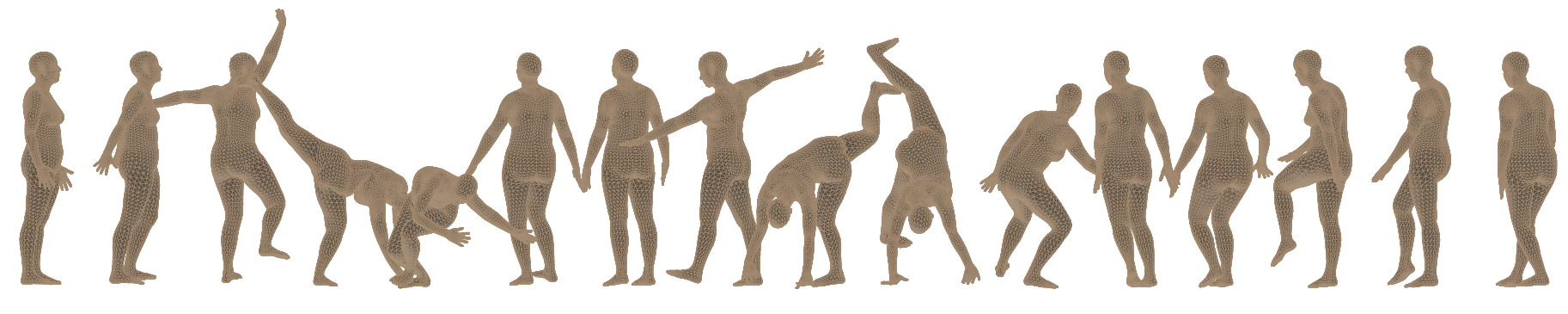}
\tabularnewline
\midrule
\raggedright
A standing person performs a cartwheel starting with their right hand, then a person is performing a cartwheels. Then the figure tilts and then kicks the air above knee level with its left leg, it then does a lower kick with its right leg.
\tabularnewline

\centering
\includegraphics[width=\linewidth,keepaspectratio]{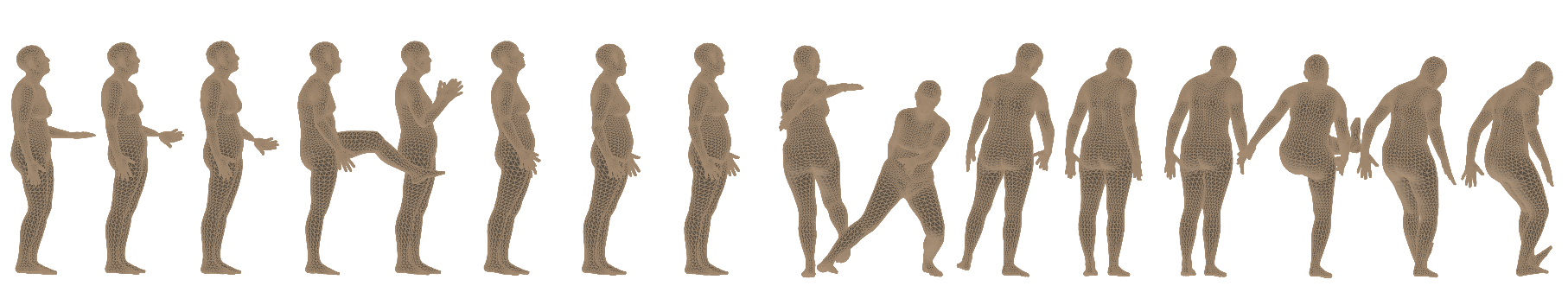}
\tabularnewline
\midrule
\raggedright
A person throws something with their left hand then kicks with their left foot before catching the object with both hands then a man jumps then kicks the air whilst moving to the opposite end of the room. Then the person takes a few steps forward kicks something with their left foot.
\tabularnewline
\end{tabular}
\end{figure*}



\end{document}